\newtheorem{theorem}{Theorem}
\newcommand\numberthis{\addtocounter{equation}{1}\tag{\theequation}}
\title{DeepExtrema: A Deep Learning Approach for Forecasting Block Maxima\\ in Time Series Data}
\author{
Asadullah Hill Galib$^1$\and
Andrew McDonald$^1$\and
Tyler Wilson$^1$\and
Lifeng Luo$^2$\And
Pang-Ning Tan$^1$\
\affiliations
$^1$Department of Computer Science \& Engineering, Michigan State University\\
$^2$Department of Geography, Michigan State University
\emails
\{galibasa, mcdon499, wils1270, lluo, ptan\}@msu.edu
}
\begin{document}

\maketitle

\begin{abstract}
Accurate forecasting of extreme values in time series is critical due to the significant impact of extreme events on human and natural systems. %Unfortunately, extreme values are hard to predict due to their rare frequency of occurrence. %Extreme value theory provides a systematic way to capture the tail distribution of a stochastic process but has limited capacity in terms of modeling complex nonlinear relationships present in the data. 
%To overcome this challenge, this 
This paper presents DeepExtrema, a novel framework that combines a deep neural network (DNN) with generalized extreme value (GEV) distribution to forecast the block maximum value of a time series. Implementing such a network is a challenge as the framework must preserve the inter-dependent constraints among the GEV model parameters even when the DNN is initialized. %Model initialization is another issue as the DNN must be initialized in such a way to ensure it preserves the constraints among the GEV parameters. 
We describe our approach to address this challenge and present an architecture that enables both conditional mean and quantile prediction of the block maxima. The extensive experiments performed on both real-world and synthetic data demonstrated the superiority of DeepExtrema compared to other baseline methods.
  
  %Extreme values are crucial, in most real-life situations. However, extreme values are difficult to predict because they fluctuate so much in comparison to the majority. Generalized Extreme Value (GEV) theory may be useful in this sense as it is well-suited for fitting block maxima distribution. But its inter-dependent constraint makes it difficult to implement. Also, learning data point-specific GEV parameters is challenging. This paper presents a deep learning approach for predicting block maxima by incorporating Generalized Extreme Value Theory. In doing so, it offers a novel way to maintain the constraints of GEV. Also, using the GEV log-likelihood, it employs an ensemble loss function. Further, it extends point estimations to quantile estimations. On a real-world dataset for predicting extreme events, its effectiveness is demonstrated using point estimations and quantile estimations.
\end{abstract}

\iffalse
Todo:

\fi

\section{Introduction}

Extreme events such as droughts, floods, and severe storms occur when the values of the corresponding geophysical variables (such as temperature, precipitation, or wind speed) reach their highest or lowest point during a period or surpass a threshold value. Extreme events have far-reaching consequences for both humans and the environment. For example, four of the most expensive hurricane disasters in the United States since 2005---Katrina, Sandy, Harvey, and Irma---have each incurred over \$50 billion in damages 
%\$170 billion, \$74 billion, \$131 billion, and \$52 billion costs of damages
with enormous death tolls~\cite{usgao}. Accurate forecasting of the extreme events \cite{wang2021johan} is therefore crucial as it not only helps provide timely warnings to the public but also enables emergency managers and responders to better assess the risk of potential hazards caused by future extreme events. 

%researchers and policymakers learn more about weather and climate change, but it also can influence climate models.

\begin{figure}
	\centering
	\includegraphics[width=2.5in]{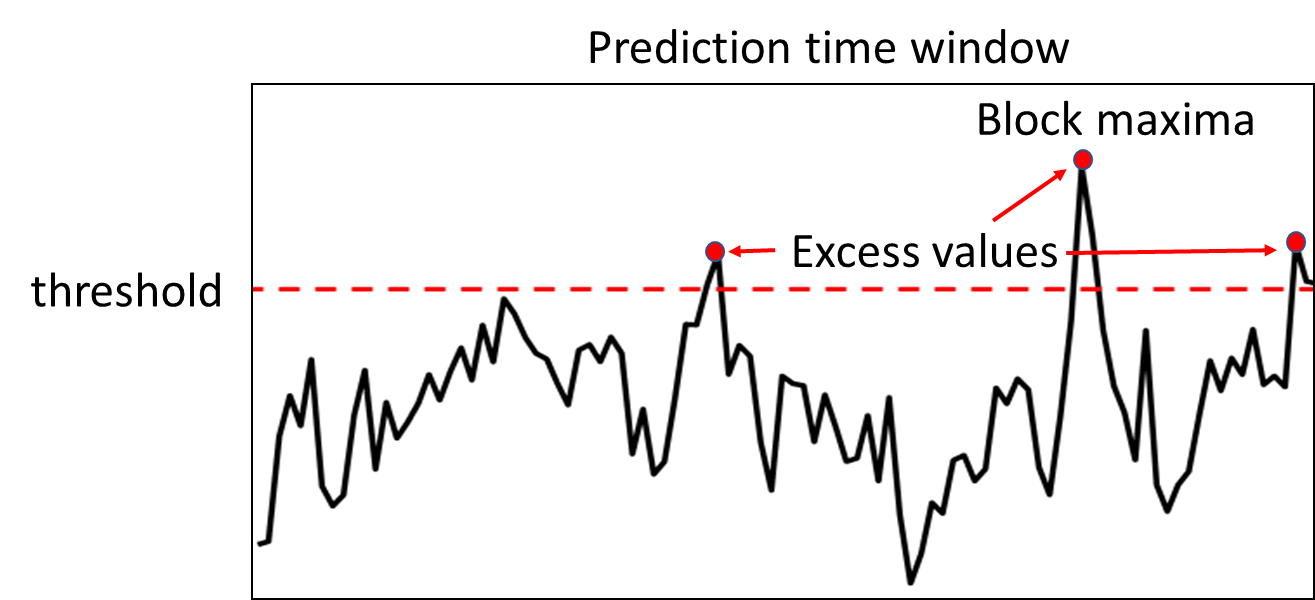}
	\caption{Types of extreme values in a given time window.}
	\label{fig:evt}
	\vspace{-0.1cm}
\end{figure}

Despite its importance, forecasting time series with extremes can be tricky as the extreme values may have been ignored as outliers during training to improve the generalization performance of the model. Furthermore, as current approaches are mostly designed to minimize the mean-square prediction error, their fitted models focus on predicting the conditional expectation of the target variable rather than its extreme values \cite{bishop2006pattern}.  
%In practice, researchers often exclude extreme values as outliers. This may increase overall generalizability by capturing normal points well, but it prevents models from fitting extremes. On the other hand, 
%Alternative loss functions beyond mean-square prediction error, such as quantile loss, can be employed to better estimate the values at higher quantiles beyond the conditional mean estimates. While the quantile loss can be shown to be a maximum likelihood estimator of the asymmetric Laplace distribution \cite{koenker1999goodness}, characterizing the asymptotic tail behavior of a random variable in terms of the asymmetric Laplace distribution is not well-motivated. 
%However, these methods fail to capture the actual distribution of extreme values since their predictions may not be consistent with extreme value theory (EVT). 
Extreme value theory (EVT) offers a statistically well-grounded approach to derive the limiting distribution governing a sequence of extreme values~\cite{coles2001introduction}. The two most popular distributions studied in EVT are the generalized extreme value (GEV) and generalized Pareto (GP) distributions. Given a prediction time window, GEV governs the distribution of its block maxima, whereas GP is concerned with the distribution of excess values over a certain threshold, as shown Figure \ref{fig:evt}. This paper focuses on forecasting the block maxima as it allows us to assess the worst-case scenario in the given forecast time window and avoids making ad-hoc decisions related to the choice of excess threshold to use for the GP distribution. Unfortunately, classical EVT has limited capacity in terms of modeling highly complex, nonlinear relationships present in time series data. %It is typically used in conjunction with statistical models that infer simple relationships among the predictors and target variables. 
For example, \cite{kharin2005estimating} uses a simple linear model with predictors to infer the parameters of a GEV distribution.

%of characterizing the distribution of extremes using the generalized extreme value distribution (GEV) or generalized Pareto distribution (GPD), but no point or quantile predictions can be made directly from this {\color{red}[Tyler's Note: What does this mean? There are formulas for calculating quantiles of GPD and GEV]}. Incorporating extreme value theory while taking the advantage of deep learning or machine learning might help to bridge the gap.  
%Learning probability distributions that deviate greatly from the median is the focus of extreme value theory. The two most popular distributions studied in EVT are the generalized extreme value distribution and the generalized Pareto distribution. The GEV is concerned with the distribution of block maxima, whereas the GPD is concerned with the distribution of excesses over a threshold. In this work, we focus solely on the GEV to avoid the ad hoc decisions associated with selecting the GPD threshold.
% Para3: Start talking about extreme value theory, what it is... how it is used... references
Deep learning methods have grown in popularity in recent years due to their ability to capture nonlinear  dependencies in the data. Previous studies have utilized a variety of deep neural network architectures for time series modeling, including long short-term memory networks \cite{sagheer2019time,masum2018multi,laptev2017time},  convolutional neural networks \cite{bai-empirical-2018,zhao2017convolutional,yang2015deep}, encoder-decoder based RNN \cite{peng2018multi}, and attention-based models \cite{zhang2019lstm,aliabadi2020attention}. %\cite{wang2020deeppipe} proposed a hybrid uncertainty quantification approach for point estimations and quantile estimations. 
However, these works are mostly focused on predicting the conditional mean of the target variable. %with only a few of them \cite{ding2019modeling,polson2020deep}  focusing on forecasting the extreme values. 
%However, very little existing work deals with forecasting extremes in time series. 
While there have some recent attempts to incorporate EVT into deep learning~\cite{wilson2022,ding2019modeling,polson2020deep}, they are primarily focused on modeling the tail distribution, i.e., excess values over a threshold, using the GP distribution, rather than forecasting the block maxima using the GEV distribution. Furthermore, instead of inferring the distribution parameters from data, some methods \cite{ding2019modeling} assume that the parameters are fixed at all times and can be provided as user-specified hyperparameters while others \cite{polson2020deep} do not enforce the necessary constraints on parameters of the extreme value distribution.

%\cite{ding2019modeling} incorporated the GPD into their loss function for time series forecasting. But they did not infer the GPD parameters and instead treated the GPD shape parameter as a fixed hyper-parameter independent of the input. Polson et al. \cite{polson2020deep} also incorporated the GPD into their MLE loss function and, unlike Ding et al, assumed the GPD parameters were dependent on the predictors but don't describe a method for enforcing the necessary GPD constraints or how they combine distribution prediction with the prediction of point estimates.
%Using the assumption that GEV parameters are time-sensitive, Kharin et al. \cite{kharin2005estimating} analyzed GEV parameters. They were concerned with characterization and overlooked predictions.

%Despite offering theoretically well-motivated ways of characterizing the tail behavior of distributions, EVT is typically used in conjunction with simple statistical models which can only infer simple relationships in the predictors. For example, Kharin et al \cite{kharin2005estimating} use a simple log-linear model to predict the parameters of a GEV distribution. On the other hand, deep learning is widely used to capture complex nonlinear spatial and temporal relationships within data. 
Incorporating the GEV distribution into the deep learning formulation presents many technical challenges. First, the GEV parameters must satisfy certain positivity constraints to ensure that the predicted distribution has a finite bound~\cite{coles2001introduction}. Maintaining these constraints throughout the training process is a challenge since the model parameters depend on the observed predictor values in a mini-batch. Another challenge is the scarcity of data since there is only one block maximum value per time window. This makes it hard to accurately infer the GEV parameters for each window from a set of predictors. Finally, the training process is highly sensitive to model initialization. 
%\st{
%An improper initialization may lead to violations of %the positivity constraints as the initial DNN outputs may produce an unbounded log likelihood function  or an ill-defined expected value of block maxima when the GEV shape parameter, $\xi$, is beyond certain range of values %$\xi <-1$, and its mean is not defined for $\xi >1$  \cite{coles2001introduction}, i.e., $\xi <-1$ or $\xi > 1$.
%}
For example, the random initialization of a deep neural network (DNN) can easily violate certain regularity conditions of the GEV parameters estimated using maximum likelihood (ML) estimation. An improper initialization may lead to $\xi$ estimates that defy the regularity conditions. For example, if $\xi < -0.5$, then  resulting ML estimators may not have the standard asymptotic properties \cite{smith1985}, whereas if $\xi > 1$, then the conditional mean is not well-defined. Without proper initialization,
the DNN will struggle to converge to a feasible solution with acceptable values of the GEV parameters. Thus, controlling the initial estimate of the parameters is difficult but necessary. 

To overcome these challenges, we propose a novel framework called \texttt{DeepExtrema} that utilizes the GEV distribution to characterize the distribution of block maximum values for a given forecast time window. The parameters of the GEV distribution are estimated using a DNN, which is trained to capture the nonlinear dependencies in the time series data. This is a major departure from previous work by \cite{ding2019modeling}, where the distribution parameters are assumed to be a fixed, user-specified hyperparameter. 
\texttt{DeepExtrema} reparameterizes the GEV formulation to ensure that the DNN output is compliant with the GEV positivity constraints. In addition, \texttt{DeepExtrema} offers a novel, model bias offset mechanism in order to ensure that the 
%\st{GEV constraints are preserved at all times, even when the weights of the DNN are randomly initialized.}
regularity conditions of the GEV parameters are satisfied from the beginning.
%that it produces desired/reasonable output, to begin with. 

%predictions of the shape parameter at the start of training so effective initialization is essential.
% Previous methods don't work well because.... that motivates our proposed approach.

In summary, the main contributions of the paper are: 
\begin{enumerate}
    \item We present a novel framework to predict the block maxima of a given time window by incorporating GEV distribution into the training of a DNN. 
    \item We propose a reformulation of the GEV constraints to ensure they can be enforced using activation functions in the DNN. %We also develop a reparametrization technique for constraining the model output in such a way that it satisfies the GEV constraints. 
    \item We introduce a model bias offset mechanism to ensure that the DNN output preserves %\st{the GEV constraints} 
    the regularity conditions of the GEV parameters despite its random initialization. 
    \item We perform extensive experiments on both real-world and synthetic data to demonstrate the effectiveness of \texttt{DeepExtrema} compared to other baseline methods.
\end{enumerate}

\section{Preliminaries}

\subsection{Problem Statement}
Let $z_1,z_2,\cdots,z_T$ be a time series of length $T$. Assume the time series is partitioned into a set of time windows, where each window $[t-\alpha,t+\beta]$ contains a sequence of predictors, $x_t = (z_{t-\alpha},z_{t-\alpha+1},\cdots,z_{t})$, and target, $\Tilde{y}_t = (z_{t+1},z_{t+2},\cdots,z_{t+\beta})$. Note that $\beta$ is known as the forecast horizon of the prediction. For each time window, let $y_t = \max_{\tau \in \{1,\cdots,\beta\}} z_{t+\tau}$ be the block maxima of the target variable at time $t$. Our time series forecasting task is to estimate the block maxima, $\hat{y}_t$,
as well as its upper and lower quantile estimates, $\hat{y}_U \text{ and } \hat{y}_L$, of a future time window based on current and past data, $x_t$.

%Let $\mathcal{D} = \{(X_{1:t}, \Tilde{Y}_{t+1: t+k} )\}$ denotes a time series data set, where $X_{1:T}$ refers to the predictor attributes from time step $1$ to $T$, more specifically historical time series event in our case, and $\Tilde{Y}_{t+1:t+k}$ denotes the block maxima of next $k$ time steps. For individual data point: $x_{1:t}^{(i)}$ denotes the predictor attributes from time step $1$ to $t$ and $\Tilde{y}_{t+1: t+n}$ denotes the block maximum of next $n$ time steps for the \textit{i-th} data point.

%The goal of this time series problem is that, given, input $X_{1:t}$, how to predict block maxima $\Tilde{Y}_{t+1:t+k}$ of next $k$ time steps. Also, it is concerned with how to extrapolate GEV parameters (location ($\mu$), scale ($\sigma$), and shape ($\xi$)) for each data point. That is, given input $X_{1:t}$, how to estimate GEV parameters, $\{(\mu^{(i)}, \sigma^{(i)}, \xi^{(i)})}_{i=1\}^N$ for $N$ data points. 

%\subsection{Extreme Value Theory (EVT)}
\subsection{Generalized Extreme Value Distribution}
\label{sec:gev}

%Extreme value theory is concerned with the limiting probability distribution of a random variable that deviates significantly from its median. %The generalized extreme value (GEV) and generalized Pareto (GP) distributions are the two most commonly studied distributions in EVT. 
The GEV distribution governs the distribution of block maxima in a given window. %, while the GP distribution models the distribution of excesses over a given threshold. To avoid the difficulties associated with the choice of GP threshold, this paper focuses on the GEV distribution. According to the extreme value theory \cite{coles2001introduction}, if there exist sequences of constants ${a_n > 0}$ and ${b_n}$ such that \[ Pr{(M_n - b_n)/a_n \leq y} \rightarrow{G(y)} \quad \text{as } n  \rightarrow{\infty} \] for a non-degenerate distribution $G$, then %$G$ is a CDF of the form:
Let $Y = \max \{z_1, z_2, \cdots, z_t\}$. If there exist sequences of constants ${a_t > 0}$ and ${b_t}$ such that \[ Pr{(Y - b_t)/a_t \leq y} \rightarrow{G(y)} \quad \text{as } t  \rightarrow{\infty} \] for a non-degenerate distribution $G$, then the cumulative distribution function $G$ belongs to a family of GEV distribution of the form \cite{coles2001introduction}:
\begin{equation} 
    G(y) = \exp\bigg\{ -\bigg[1+\xi(\frac{y-\mu}{\sigma})\bigg]^{-1/\xi}\bigg\}
    \label{gev}
\end{equation}
%which defines a family of CDF functions that governs the GEV distribution. Here, 

The GEV distribution is characterized by the following parameters: $\mu$ (location), $\sigma$ (scale), and $\xi$ (shape). The expected value of the distribution is given by
\begin{equation} 
y_{mean} = \mu + \frac{\sigma}{\xi}\bigg[\Gamma(1-\xi) -1 \bigg]
\label{meanval}
\end{equation}
where  $\Gamma(x)$ denotes the gamma function of a variable 
$x>0$. Thus, $y_{mean}$ is only well-defined for $\xi<1$.  Furthermore, 
%From the CDF of GEV \eqref{gev}, quantile and log-likelihood function can be easily derived . The 
the $p^\textit{th}$ quantile of the GEV distribution, $y_p$, can be calculated as follows:    
\begin{equation} 
    y_p = \mu + \frac{\sigma}{\xi}\bigg[(- \log p)^{-\xi} -1\bigg]
    \label{quantile}
\end{equation}
%and the mean $y_{mean}$ is defined as: 

Given $n$ independent block maxima values, $\{y_1, y_2, \cdots,y_n\}$, with the distribution function given by Equation \eqref{gev} and assuming $\xi \neq 0$, its log-likelihood function is given by:
\begin{align*}
\ell_{GEV}(\mu, \sigma, \xi) & = -n \log\sigma - (\frac{1}{\xi} + 1)\,\sum_{i=1}^n \: \log (1+\xi\,\frac{y_i-\mu}{\sigma}) 
\\&  \quad- \sum_{i=1}^n\:(1+\xi\,\frac{y_i-\mu}{\sigma})^{-1/\xi} \quad  \numberthis \label{loglikelihood}
\end{align*}

The GEV parameters $(\mu, \sigma, \xi)$ can be estimated  using the maximum likelihood (ML) approach 
by maximizing \eqref{loglikelihood} subject to the following positivity constraints: 
\begin{equation} 
    \sigma > 0 \ \ \ \ \textrm{and} \ \ \ \ 
%\end{equation}
%\begin{equation} \label{c-2}
   \forall\,{i}: 1+ \frac{\xi}{\sigma}(y_i-\mu) > 0 
   \label{c-1}
\end{equation}

In addition to the above positivity constraints, the shape parameter $\xi$ must be within certain range of values in order for the ML estimators to exist and have regular asymptotic properties \cite{coles2001introduction}. Specifically, the ML estimators have regular asymptotic properties as long as $\xi > -0.5$. Otherwise, if 
%maximum likelihood estimates of the GEV parameter $\xi$ have regular asymptotic properties when $\xi > -0.5$. However, for 
$-1< \xi < -0.5$, then the ML estimators may exist but will not have regular asymptotic properties. Finally, the ML estimators do not exist if $\xi < - 1$ \cite{smith1985}.

\section{Proposed Framework: DeepExtrema}
This section presents the proposed \texttt{DeepExtrema} framework for predicting the block maxima of a given time window. The predicted block maxima $\hat{y}$ follows a GEV distribution, whose parameters are conditioned on observations of the predictors $x$. %We assume that %all the data points come from different GEV distributions having different 
%the GEV parameters are related in a potentially non-linear way to the predictors $x$. 
%Formally, our goal is to learn the functions:
Figure \ref{overall} provides an overview of the \texttt{DeepExtrema} architecture. Given the input  predictors $x$, the framework uses a stacked LSTM network to learn a representation of the time series. The LSTM will output a latent representation, which will used by a fully connected layer to generate the GEV parameters: 
\begin{equation} 
(\mu, \sigma, \xi_u, \xi_l) = LSTM(x)
\label{LSTM}
\end{equation}
where $\mu$, $\sigma$, and $\xi$'s are the location, shape, and scale parameters of the GEV distribution. Note that $\xi_u$ and $\xi_l$ are the estimated parameters due to reformulation of the GEV constraints, which will be described in the next subsection.

The proposed Model Bias Offset (MBO) component performs bias correction on the estimated GEV parameters to ensure that the LSTM outputs preserve  %\st{the GEV constraints in \eqref{c-1}} 
the regularity conditions of the GEV parameters and generate a feasible value for $\xi$ irrespective of how the network was initialized. The GEV parameters are subsequently provided to a fully connected layer to obtain point estimates of the block maxima, which include its expected value $\hat{y}$ as well as upper and lower quantiles, $\hat{y}_U$ and $\hat{y}_L$, using the equations given in \eqref{meanval} and \eqref{quantile}, respectively. The GEV parameters are then used to compute the negative log-likelihood of the estimated GEV distribution, %, which is the basis of the loss function. With GEV loss, 
which will be combined with the root-mean-square error (RMSE) of the predicted block maxima to determine the overall loss function. %Finally, using the GEV parameters, predictions can be generated for different quantiles using the formula given in Equation \ref{quantile}. %estimations are made to make the predictions flexible.
Details of the different components are described in the subsections below.
%In the following subsections, GEV parameter estimations, Model Bias Offset (MBO), and loss function of the learning process are described. 

\begin{figure}[t]
\centering
\includegraphics[width=0.91\columnwidth]{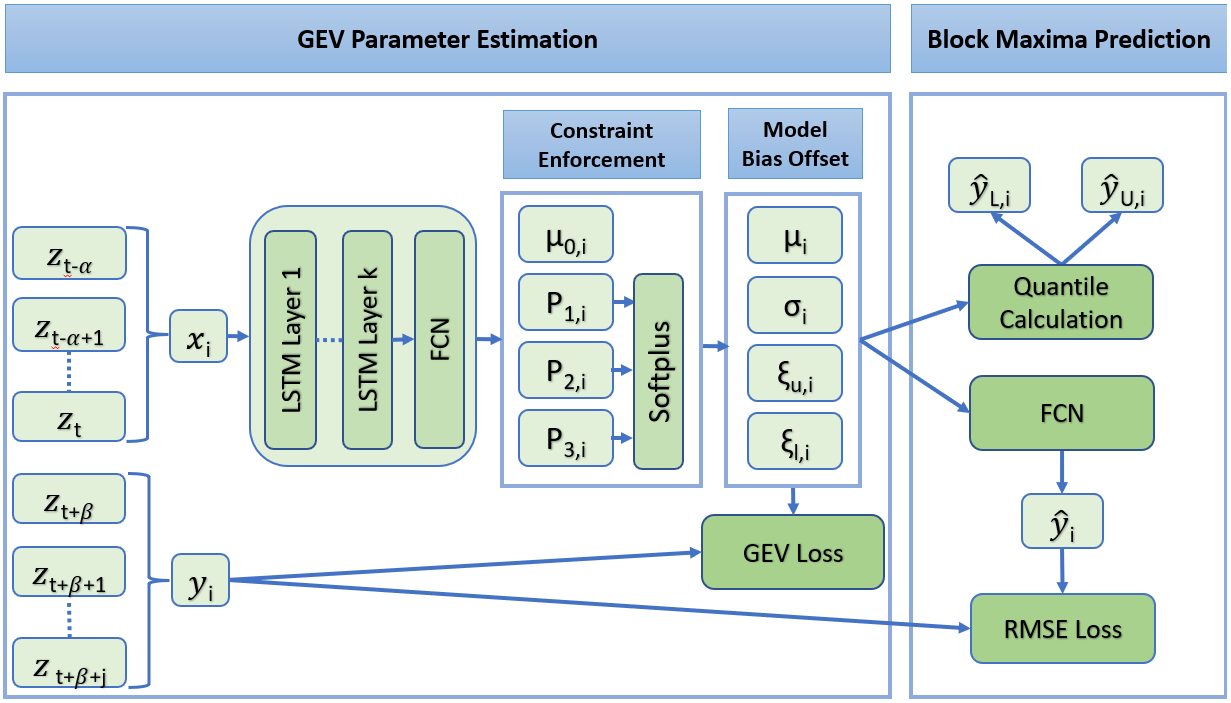} % Reduce the figure size so that it is slightly narrower than the column. Don't use precise values for figure width.This setup will avoid overfull boxes.
\caption{Proposed \texttt{DeepExtrema} framework for predicting block maxima using GEV distribution.}
\label{overall}
\end{figure}

\subsection{GEV Parameter Estimation}
\label{sec:gev_est}

Let $\mathcal{D} = \{(x_i,y_i)\}_{i=1}^n$ be a set of training examples, where each $x_i$ denotes the predictor time series and $y_i$ is the corresponding block maxima for time window $i$. A na\"ive approach is to assume that the GEV parameters $(\mu,\sigma,\xi)$ are constants for all time windows. This can be done by fitting a global GEV distribution to the set of block maxima values $y_i$'s using the maximum likelihood approach given in \eqref{loglikelihood}. Instead of using a global GEV distribution with fixed parameters, our goal is to learn the parameters  $(\mu_i,\sigma_i,\xi_i)$ of each window $i$ using the predictors $x_i$. The added flexibility enables the model to improve the accuracy of its block maxima prediction, especially for non-stationary time series. 

%In a single global GEV distribution fitting scenario (where the GEV parameters are considered to be constant for all block maxima), the GEV parameters are derived from the block maxima -- $y's$. In this study, not only the GEV parameters are considered to be different for each block maxima, but also the model learns the GEV parameters ($\mu, \sigma,$ and $\xi$) from input $x's$ itself, rather than the block maxima -- $y's$.

The estimated GEV parameters generated by the LSTM must satisfy the two positivity constraints given by the inequalities in %(see 
\eqref{c-1}. %and \eqref{c-2}. 
%In estimating the GEV parameters, it is challenging to maintain the constraints, especially the second constraint, during training. %\eqref{c-2}. 
While the first positivity constraint on $\sigma_i$ %in \eqref{c-1} 
is straightforward to enforce, maintaining the second one %\eqref{c-2} 
is harder as it involves a nonlinear relationship between $y_i$ and the estimated GEV parameters, $\xi_i$, $\mu_i$, and $\sigma_i$. The GEV parameters may vary from one input $x_i$ to another, and thus, learning them from the limited training examples is a challenge. Worse still, some of the estimated GEV parameters could be erroneous, especially at the initial rounds of the training epochs, making it difficult to satisfy the constraints throughout the learning process.  
%rather than just a single parameter. 
%In addition, the estimated $\xi_i$ could be invalid, violating the regularity conditions described in Section \ref{sec:gev}, which further complicates matters, especially when trying to enforce the second constraint in \eqref{c-1}. %which further complicates matters. 

To address these challenges, we propose a reformulation of the second constraint in \eqref{c-1}. This allows the training process to proceed even though the second constraint in \eqref{c-1} has yet to be satisfied especially in the early rounds of the training epochs. %is proposed in the next subsection.
%The positivity constraint on $\sigma$ and the reformulation of the second constraint in \eqref{c-1} are then re-parameterized to ensure the predicted GEV parameter values satisfy the constraints.  
%\subsubsection{Constraint Reformulation:}
%To ensure that the constraints are still satisfied in spite of the disagreement between the GEV parameters ($\mu$, $\sigma$, or $\xi$), 
Specifically, we relax the hard constraint by adding a small tolerance factor, $\tau > 0$, as follows:
\begin{equation}
\forall\,{i}: 1+ \frac{\xi}{\sigma}(y_i-\mu) + \tau \ge 0.
\label{c-2}
\end{equation}
The preceding soft constraint allows for minor violations of the second constraint in \eqref{c-1} as long as $1+ \frac{\xi}{\sigma}(y_i-\mu) > -\tau$ for all time windows $i$. Furthermore, to ensure that the inequality holds for all $y_i$'s, we reformulate the constraint in \eqref{c-2} in terms of $y_{\min} = \min_i y_i$ and $y_{\max} = \max_i y_i$ as follows:
%With the given tolerance factor, the following theorem can be used to provide an upper and lower bound on $\xi$: %we propose the following theorem to reformulate the second constraint in \eqref{c-1}.  
\begin{theorem}
%\textit{\textbf{Theorem 1:}} 
Assuming $\xi \neq 0$, the soft constraint in \eqref{c-2} can be reformulated into the following bounds on $\xi$: %is bounded on both ends, 
\begin{equation} 
-\frac{\sigma}{y_{\mathrm{max}} -\mu} \; (1+\tau) \le \xi \le  \frac{\sigma}{\mu - y_{\mathrm{min}}} \; (1+\tau) \label{c2new} 
\end{equation}
where $\tau$ is the tolerance on the constraint in \ref{c-1}.
\end{theorem}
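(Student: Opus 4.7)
The plan is to eliminate the universal quantifier over $i$ by reducing \eqref{c-2} to two worst-case conditions determined by the sign of $\xi$. Rewrite the soft constraint as
\[
\frac{\xi}{\sigma}\,(y_i - \mu) \;\ge\; -(1+\tau), \qquad \forall\,i.
\]
Because $\sigma > 0$, the left-hand side, viewed as a function of $y_i$, is monotone in $y_i$ with the direction of monotonicity controlled by the sign of $\xi$. Hence it suffices to enforce the inequality at the single extremal $y_i$ value that minimizes the left-hand side, replacing the quantifier over $i \in \{1,\dots,n\}$ by a condition at either $y_{\min}$ or $y_{\max}$.

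Next I would split into cases. When $\xi > 0$, the quantity $\frac{\xi}{\sigma}(y_i - \mu)$ is increasing in $y_i$, so its minimum over the data is attained at $y_i = y_{\min}$; substituting and isolating $\xi$ (noting that $\mu - y_{\min} > 0$ in the nontrivial regime) yields the upper bound $\xi \le \frac{\sigma}{\mu - y_{\min}}(1+\tau)$. When $\xi < 0$, the same quantity is decreasing in $y_i$, so the binding value is $y_i = y_{\max}$; isolating $\xi$ after dividing by the negative factor $\xi(y_{\max}-\mu)/\xi$ and tracking the sign flip gives the lower bound $\xi \ge -\frac{\sigma}{y_{\max} - \mu}(1+\tau)$. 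The boundary case $\xi = 0$ is excluded by assumption, and the bounds join continuously at $0$, which lies in the interval \eqref{c2new}, so no additional case analysis is needed there.

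Finally I would verify the converse direction: if $\xi$ satisfies \eqref{c2new}, then \eqref{c-2} holds for every $y_i \in [y_{\min}, y_{\max}]$. This follows by plugging an arbitrary $y_i$ into $\frac{\xi}{\sigma}(y_i - \mu)$ and bounding it below using the extremal substitution opposite to the one used above (use $y_{\max}$ when $\xi < 0$ and $y_{\min}$ when $\xi > 0$), which recovers exactly $-(1+\tau)$ by construction of the bounds in \eqref{c2new}.

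The main obstacle is not a deep idea but careful bookkeeping: one has to flip inequalities correctly when dividing by a potentially negative $\xi$, and one has to recognize implicitly that the stated bounds presume $\mu \in (y_{\min}, y_{\max})$, which is the relevant regime for GEV modeling (otherwise one side becomes vacuous and the reformulated inequality still holds in the degenerate sense). No additional machinery beyond sign analysis of an affine function in $y_i$ is required.
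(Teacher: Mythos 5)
Your proposal is correct and follows essentially the same route as the paper: reduce the universally quantified soft constraint to its two endpoint instances at $y_{\min}$ and $y_{\max}$ and rearrange each to isolate $\xi$. One small bookkeeping note: in your $\xi<0$ case there is actually no sign flip, because one divides by the positive quantity $(y_{\mathrm{max}}-\mu)/\sigma$ rather than by $\xi$ itself --- the paper's proof does exactly this and so needs no case split on the sign of $\xi$ at all; your explicit monotonicity argument, converse check, and the observation that the bounds presuppose $y_{\min}<\mu<y_{\max}$ are refinements that the paper leaves implicit.
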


\begin{proof}
%Theorem 1 can be proven using \eqref{c-2} when $\xi \neq 0$. %for all $y's$ irrespective of $\xi$ (when $\xi \neq 0$).
%Let $y_{\max} = \max_i y_i$ and $y_{\min} = \min_i y_i$.  
For the lower bound on $\xi$, set $y_i$ to be $y_{\max}$ in \eqref{c-2}:
%\textit{Lower bound on $\xi$\,:} when $y_i = y_{\mathrm{max}}$, 
\begin{align*}
%&1+ \frac{\xi}{\sigma}(y_i-\mu) + \tau \ge 0 \\
%\implies
1+ \frac{\xi}{\sigma}(y_{\mathrm{max}}-\mu) + \tau \ge 0 %\\
\implies &\frac{\xi}{\sigma}(y_{\mathrm{max}}-\mu) \ge -\:(1 + \tau ) \\
\implies&\xi \ge -\frac{\sigma}{(y_{\mathrm{max}}-\mu)}\:(1+\tau)
\end{align*}
To obtain the upper bound on $\xi$, set $y_i$ to be $y_{\min}$ in \eqref{c-2}: 
%\textit{Upper bound on $\xi$\,:} when $y_i = y_{\mathrm{min}}$, 
\begin{align*}
%&1+ \frac{\xi}{\sigma}(y_i-\mu) + \tau \ge 0 \\
%\implies
1+ \frac{\xi}{\sigma}(y_{\mathrm{min}}-\mu) + \tau \ge 0 
\implies& \frac{\xi}{\sigma}(\mu-y_{\mathrm{min}}) \le \:(1 + \tau ) \\
%\implies&\xi< \frac{-\sigma}{(y_{\mathrm{min}}- \mu)}\:(1+\tau) \\ \tag{as $y_{\mathrm{min}} - \mu$ is negative} \\
\implies&\xi \le \frac{\sigma}{(\mu-y_{\mathrm{min}})}\:(1+\tau)
\end{align*}
\end{proof}

\noindent Following Theorem 1, the upper and lower bound constraints on $\xi$ in \eqref{c2new} can be restated as follows: %In particular, the second constraint in \eqref{c-1} is subject to, 
\begin{align} \label{c2new_split}
    & \frac{\sigma}{\mu - y_{\mathrm{min}}} \; (1+\tau) - \xi \ge 0 \nonumber \\ 
   & \xi + \frac{\sigma}{y_{\mathrm{max}} -\mu} \; (1+\tau) \ge 0  
\end{align}
The reformulation imposes lower and upper bounds on $\xi$, which can be used to re-parameterize the second constraint in \eqref{c-1}. 
%So, the preceding theorem is subject to the second GEV constraint \eqref{c-2} for all $y's$ irrespective of $\xi$ (when $\xi \neq 0$).
%\subsubsection{Constraint Re-parameterization:}
%According to Theorem 1, the constraint \eqref{c-2} can be reformulated as \eqref{c2new_split}. Then, to maintain the constraints \eqref{c-1} and \eqref{c2new_split}, it needs to restrict the constraints as positive. Here, 
Next, we describe how the reformulated  constraints in \eqref{c2new_split} can be enforced by \texttt{DeepExtrema} in a DNN. 

Given an input $x_i$, \texttt{DeepExtrema} will generate the following four outputs: $\mu_i$, $P_{1i}$, $P_{2i}$, and $P_{3i}$. A softplus activation function, $softplus(x) = \log{(1+\exp{(x)})}$, which is a smooth approximation to the ReLU function, is used to enforce the non-negativity constraints associated with the GEV parameters. 
%\begin{align} \label{softplus}
%    &\nonumber 
%\end{align}
The scale parameter $\sigma_i$ can be computed using the softplus activation function on $P_{1i}$ as follows:
%Using Softplus, $\sigma$ is re-parameterized as follows:
\begin{equation} \label{r-1}
    \sigma_i  = softplus{(P_{1i})}  \\
\end{equation}
This ensures the constraint $\sigma_i \ge 0$ is met. 
The lower and upper bound constraints on $\xi_i$ given by the inequalities in \eqref{c2new_split} are enforced using the softplus function on 
%while $\xi$ is re-parameterized into its upper and lower bound estimates, 
$P_{2i}$ and $P_{3i}$:  
\begin{align} 
    &\frac{\sigma_i}{\mu_i - y_{\mathrm{min}}} \; (1+\tau) - \xi_{u,i} = softplus{(P_{2i})} \nonumber \\
    &\frac{\sigma_i}{y_{\mathrm{max}} -\mu_i} \; (1+\tau) + \xi_{l,i} = softplus{(P_{3i})}
\label{r-2}
\end{align}
By re-arranging the above equation, we obtain
\begin{align} 
    &\xi_{u,i} = \frac{\sigma_i}{\mu_i - y_{\mathrm{min}}} \; (1+\tau) - softplus{(P_{2i})} \nonumber \\
    &\xi_{l,i} = softplus{(P_{3i})} - \frac{\sigma_i}{y_{\mathrm{max}} -\mu_i} \; (1+\tau)
    \label{xi}
\end{align}
%We may use either of the formulas in \eqref{r-2} to extract $\xi$, which will be employed in the loss function.
\texttt{DeepExtrema} computes the upper and lower bounds on $\xi_i$ using the formulas in \eqref{xi}. During training, it will minimize the distance between $\xi_{u,i}$ and $\xi_{l,i}$ and will use the value of $\xi_{u,i}$ as the estimate for $\xi_i$. Note that the two $\xi_i$'s converge rapidly to a single value after a small number of training epochs. %as they are derived from Theorem 1. 
%Suppose $\xi_u$ is extracted from the first formula and $\xi_l$ is extracted from the second formula of \eqref{r-2}.  

%\texttt{DeepExtrema} uses both formulas in \eqref{xi} to generate a pair of $\xi$ estimates for each data point. It then tries to minimize the distance between $\xi_u$ and $\xi_l$. Upon training, both $\xi's$ will converge into a single value as they are derived from Theorem 1. 
%Suppose $\xi_u$ is extracted from the first formula and $\xi_l$ is extracted from the second formula of \eqref{r-2}.  

\subsection{Model Bias Offset (MBO)}

Although the constraint reformulation approach described in the previous subsection ensures that the DNN outputs will satisfy the GEV constraints, the random initialization of the network can produce estimates of $\xi$ that violate the regularity conditions described in Section \ref{sec:gev}. Specifically, the ML-estimated distribution may not have the asymptotic GEV distribution when $\xi < -0.5$ while its conditional mean is not well-defined when $\xi > 1$.
%, which further complicates matters, especially when trying to enforce the second constraint in \eqref{c-1}.
Additionally, the estimated location parameter $\mu$ may not fall within the desired range between $y_{\min}$ and $y_{\max}$ when the DNN is randomly initialized. Thus, without proper initialization, the DNN will struggle to converge to a good solution and produce acceptable values of the GEV parameters.

One way to address this challenge is to repeat the random initialization of the DNN until a reasonable set of initial GEV parameters, i.e., $y_{\min} \le \mu \le y_{\max}$ and $-0.5 < \xi < 1$, is found. However, this approach is infeasible given the size of the parameter space of the DNN. A better strategy is to control the initial output of the neural network in order to produce an acceptable set of GEV parameters, $(\mu, \sigma, \xi)$ during initialization. Unfortunately, controlling the initial output of a neural network is difficult given its complex architecture. 

\begin{figure}[t]
\centering
\includegraphics[width=0.9\columnwidth]{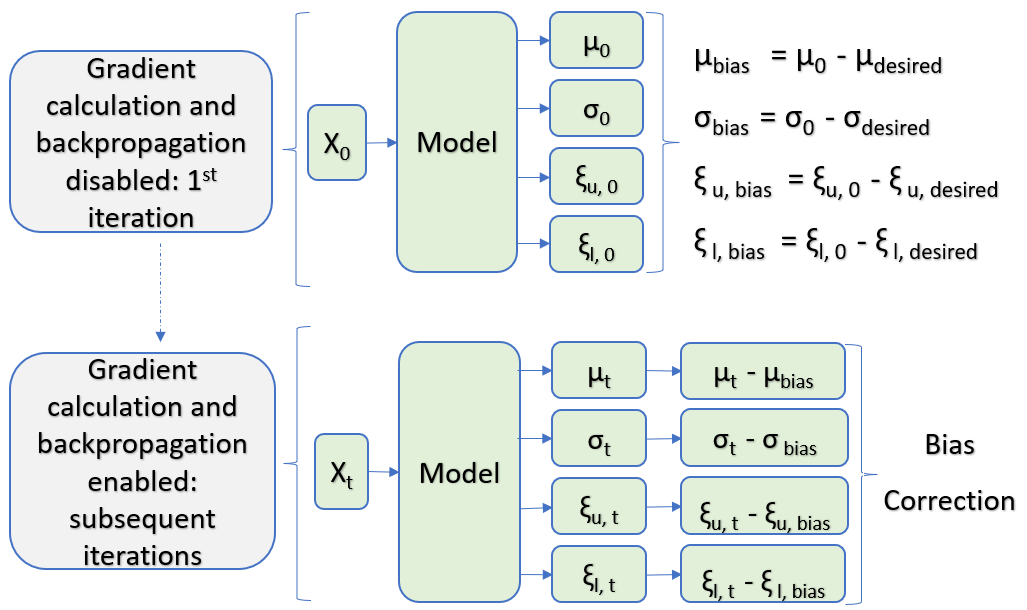} % Reduce the figure size so that it is slightly narrower than the column. Don't use precise values for figure width.This setup will avoid overfull boxes.
\caption{Model Bias Offset (MBO) to ensure the initial estimates of the GEV parameters are reasonable even when the DNN is randomly initialized.}
\label{diof}
\end{figure}

We introduce a simple but effective technique called Model Bias Offset (MBO) to address this challenge. The key insight here is to view the GEV parameters as a biased output due to the random initialization of the DNN and then perform bias correction to alleviate the effect of the initialization. To do this, let $\mu_{\mathrm{desired}}, \sigma_{\mathrm{desired}}$, and  $\xi_{\mathrm{desired}}$ be an acceptable set of initial GEV parameters. The values of these initial parameters must satisfy the regularity conditions  $-0.5 <  \xi_{\mathrm{desired}} < 1$, $\sigma_{\mathrm{desired}}>0$, and $y_{\min} \le \mu_{\mathrm{desired}} \le y_{\max}$. This can be done by randomly choosing a value from the preceding range of acceptable values, or more intelligently, using the GEV parameters estimated from a global GEV distribution fitted to the block maxima $y_i$'s in the training data via the ML approach given in \eqref{loglikelihood}, without considering the input predictors (see the discussion at the beginning of Section \ref{sec:gev_est}). 
%is introduced, as depicted in Figure \ref{diof}. The goal of MBO is to offset the model initialization in such a way that, the model can produce reasonable parameters from the very first iteration (i.e., generating $\mu, \sigma,$ and $\xi$ consistent with the block maxima -- $y's$). Let $\mu_{\mathrm{desired}}, \sigma_{\mathrm{desired}}, \xi_{\mathrm{desired}}, \text{ and } \xi'_{\mathrm{desired}}$ be vectors representing the reasonable initial GEV parameters for the first mini-batch of data in the network.  
We find that the latter strategy %fitting all the $y's$ into a single global GEV distribution, with a common $\mu, \sigma, \text{ and }\xi$ for all data points, 
works well in practice as it can lead to faster convergence especially when the global GEV parameters are close to the final values after training. 

%the initial desired GEV parameters. Because all of the $y's$ are incorporated to generate the desired GEV parameters, they are consistent with the block maxima -- $y's$. With these common $\mu, \sigma, \text{ and }\xi$ serving as reasonable parameters for all $y's$ thus making effective values of $\mu_{\mathrm{desired}}, \sigma_{\mathrm{desired}}, \xi_{\mathrm{desired}}, \text{ and } \xi'_{\mathrm{desired}}$. 

When the DNN is randomly initialized, let $\mu_{\mathrm{0}}, \sigma_{\mathrm{0}}, \xi_{u,\mathrm{0}},$ and $\xi_{l,\mathrm{0}}$ be the initial DNN output for the GEV parameters. These initial outputs may not necessarily fall within their respective range of acceptable values. We consider the difference between the initial DNN output and the desired GEV parameters as a \emph{model bias} due to the random initialization:
\begin{align} 
    &\mu_{\mathrm{bias}} = \mu_{\mathrm{0}} - \mu_{\mathrm{desired}}  
    &\sigma_{\mathrm{bias}} = \sigma_{\mathrm{0}} - \sigma_{\mathrm{desired}} \nonumber\\
    &\xi_{u,\mathrm{bias}} = \xi_{u,\mathrm{0}} - \xi_{u,\mathrm{desired}}
    &\xi_{l,\mathrm{bias}} = \xi_{l,\mathrm{0}} - \xi_{l,\mathrm{desired}}
    \label{diof_fix}
\end{align}

The model bias terms in \eqref{diof_desired} can be computed during the initial forward pass of the algorithm. The gradient calculation and back-propagation are disabled during this step to prevent the DNN from computing the loss and updating its weight with the unacceptable GEV parameters. After the initial iteration, the gradient calculation will be enabled and the bias terms will be subtracted from the DNN estimate of the GEV parameters in all subsequent iterations $t$: 
%Using MBO, at first, the network is randomly initialized as usual and the input (only the first batch of the first iteration) is passed to the network like a standard forward pass to obtain initial predictions for our GEV parameters which we denote $\mu_0$, $\sigma_0$, $\xi_0$, and $\xi'_0$. However, no loss is computed and no backward pass is performed with this pass. Using the initial output of the forward pass, the fixed vectors: $\mu_{\mathrm{fix}}, \sigma_{\mathrm{fix}}, \xi_{\mathrm{fix}}, \text{ and } \xi'_{\mathrm{fix}}$ are calculated as follows: 
%From then on, using \eqref{diof_fix}, the fixed vectors are subtracted from the network output (parameters) to get the desired output: 
\begin{align} \label{diof_desired}
    &\mu_{t} \rightarrow \mu_t - \mu_{\mathrm{bias}} 
    &\sigma_{t} \rightarrow \sigma_{t} - \sigma_{\mathrm{bias}}\nonumber \\
    &\xi_{u, t} \rightarrow \xi_{u, t} - \xi_{u, \mathrm{bias}} 
    &\xi_{l, t} \rightarrow \xi_{l, t} - \xi_{l,\mathrm{bias}}
\end{align}
Observe that, when $\mu_t$ is set to $\mu_0$, then the debiased output $\mu_t - \mu_{\mathrm{bias}}$ will be equal to $\mu_{\mathrm{desired}}$.
%The fixed vectors \eqref{diof_fix} are calculated at the very beginning (first batch; first iteration) and remained constant throughout training and evaluation. They can be regarded as the initial model bias that needs to be offset at every subsequent iterations. 
By debiasing the output of the DNN in this way, we guarantee that the initial GEV parameters are reasonable and satisfy the GEV regularity conditions.

%The LSTM component of our architecture (Fig. 1) generates the GEV parameters as its output. These outputs must satisfy the constraints given in Eq. (7). Theorem 1 restates the constraints in terms of the upper and lower bounds of $\xi$. With this reparameterization, as long as the LSTM is ``properly" initialized, Eq (12) guarantees that the bounds are satisfied at all times. Fig. 2 shows our strategy to ensure the initial LSTM outputs are consistent with the block maxima to preserve the bounds in Theorem 1.

\subsection{Block Maxima Prediction}

Given an input $x_i$, the DNN will estimate the GEV parameters needed to compute the block maxima $\hat{y}_i$ along with its upper and lower quantiles, $\hat{y}_{U,i}$ and $\hat{y}_{L,i}$, respectively. The quantiles are estimated using the formula given in \eqref{quantile}. The GEV parameters are provided as input to a fully connected network (FCN) to generate the block maxima prediction, $\hat{y}_i$. 

\texttt{DeepExtrema} employs a combination of the negative log-likelihood function $(-\ell_{GEV}(\mu, \sigma, \xi))$ of the GEV distribution and a least-square loss function to train the model. This enables the framework to simultaneously learn the GEV parameters and make accurate block maxima predictions. The loss function to be minimized by \texttt{DeepExtrema} is:
%{\color{red}
\begin{equation} 
    \mathcal{L} = \lambda_1 \: \mathcal{\hat{L}} +   (1 - \lambda_1) \; \sum_{i=1}^n (y_i  -\hat{y_i})^2   \label{loss}
\end{equation}
where $\mathcal{\hat{L}} = - \lambda_2 \; \ell_{GEV}(\mu, \sigma, \xi) + (1-\lambda_2) \sum_{i=1}^n (\xi_{u,i} - \xi_{l,i})^2$ is the regularized GEV loss. The first term in $\mathcal{\hat{L}}$ corresponds to the negative log-likelihood function given in Equation \eqref{loglikelihood} while the second term minimizes the difference between the upper and lower-bound estimates of $\xi$. The loss function $\mathcal{L}$ combines the regularized GEV loss ($\mathcal{\hat{L}}$) with the least-square loss. Here, $\lambda_1 \text{ and } \lambda_2$ are hyperparameters to manage the trade-off between different factors of the loss function.

%The GEV maximum log-likelihood estimation function \eqref{loglikelihood} can be used to learn the GEV parameters. Maximizing the log-likelihood gives the optimal GEV parameters, which is equivalent to minimizing the negative log-likelihood using a deep neural network. \texttt{DeepExtrema} employs the GEV negative log-likelihood as the basis of its loss function. Also, \texttt{DeepExtrema} combines both the GEV negative log-likelihood loss as well as the RMSE loss of the $\hat{y}$ and $y$. First, it calculates loss from the GEV theory. It also combines the RMSE of the two $\xi's$ generated from the network. Here, $\xi$ is extracted from the upper bound constraint on $\xi$: first formula of \eqref{xi}. And, $\xi'$ is extracted from the lower bound constraint on $\xi$: second formula of \eqref{xi}.  Then, using the GEV loss, it calculates the overall loss by combining the RMSE of the estimations and the ground truths: 

%\begin{align} \label{loss_gev}
%    Loss_{GEV} &= \lambda_1*(-\log L(\mu, \sigma, \xi)) + \nonumber\\
%    &(1-\lambda_1)*\frac{1}{n} \sum_{i=1}^n \: (\xi_i - \xi'_i)^2  
%\end{align}

%\begin{align} \label{loss}
%    Loss &=\lambda_2*Loss_{GEV} + (1-\lambda_2)*\frac{1}{n} \sum_{i=1}^n \: (y_i  -\hat{y_i})^2 
%\end{align}

\section{Experimental Evaluation}
This section presents our experimental results comparing \texttt{DeepExtrema} against the various baseline methods. The code and datasets are available at  \url{https://github.com/galib19/DeepExtrema-IJCAI22-}. 

\subsection{Data}
%To validate the effectiveness of our proposed framework, we have conducted extensive experiments using both synthetic and real-world data.
\subsubsection{Synthetic Data} As the ground truth GEV parameters are often unknown, we have created a synthetic dataset to evaluate the performance of various methods in terms of their ability to correctly infer the parameters of the GEV distribution. The data is generated assuming the GEV parameters are functions of some input predictors $x \in \mathbb{R}^6$. We first generate $x$ by random sampling from a uniform distribution. %We have created our synthetic data with the assumption that the GEV parameters are different for each data point and are functions of input X \eqref{functions_of_x}. First, we have taken input X from a uniform random distribution. Then, we 
We then assume a non-linear mapping from $x$ to the GEV parameters $\mu$, $\sigma$, and $\xi$, via the following nonlinear equations:  
\begin{align} \label{syn_data}
    & \mu(x) =  w_\mu^T(\exp{(x)}+x) \ \ \ \ 
    & \sigma(x) =  w_\sigma^T(\exp{(x)}+x) \nonumber \\
    & \xi(x) =   w_\xi^T(\exp{(x)}+x)
\end{align}
where $w_\mu$, $w_\sigma$, and $w_\xi$ are generated from a standard normal distribution. Using the generated $\mu$, $\sigma$, and $\xi$ parameters, we then randomly sample $y$ from the GEV distribution governed by the GEV parameters. Here, $y$ denotes the block maxima as it is generated from a GEV distribution. We created 8,192 block maxima values for our synthetic data. %then train the different methods on the synthetic data, $\{(x_i,y_i)\}_{i=1}^N$, where $N = 8192$. %, we have learned the non-linear mapping to estimate the GEV parameters using a deep neural network. 

\subsubsection{Real-world data}
We consider the following 3 datasets for our experiments. %%\emph{Hurricane}, \emph{Solar}, and \emph{Weather}.  

\paragraph{Hurricane:} This corresponds to tropical cyclone intensity data obtained from the HURDAT2 database~\cite{landsea2013atlantic}. 
%Forecasting the intensity (i.e., maximum sustained wind speed) of a hurricane is challenging due to its fluctuating nature. As the strong hurricane-force winds can cause extensive damages and even loss of lives, accurate forecasting of the block maxima intensity is critical. The best track database (HURDAT2) \cite{landsea2013atlantic} from the National Hurricane Center is used in this study. 
%The database contains hurricane data of the Atlantic (from 1851-2019) and Northeast \& North Central Pacific (from 1949-2019) regions. 
There are altogether 3,111 hurricanes spanning the period between 1851 and 2019. For each hurricane, wind speeds (intensities) were reported at every 6-hour interval. %The number of 6-hourly time steps available for each hurricane is not fixed, ranging from 1 to 132-time steps. %On average, each hurricane has 26 6-hourly time steps, with a standard deviation of 16. As the data consists of variable length time steps, they have to be pre-processed to ensure consistency. %We employ a moving time window approach to generate more samples from each hurricane. %Apart from the best track data, NHC also provides their official forecasts, which will be used as a gold standard for comparison. %data are also being collected to compare. 
%The NHC official forecasts from 2012 to 2020 were used to compare against various competing methods. %the official forecasts have been used for testing. 
We consider only hurricanes that have at least 24-time steps at minimum for our experiments. For each hurricane, we have created non-overlapping time windows of length 24 time steps (6 days). We use the first 16 time steps (4 days) in the window as the predictor variables and the block maxima of the last 8 time steps (2 days) as the target variable. 

\paragraph{Solar:} This corresponds to half-hourly energy use (kWh) for 55 families over the course of 284 days from Ausgrid database \cite{Ausgrid_2013}. %As the electricity consumption largely fluctuates over half-hourly period and varies from household to household, it is hard to find any distinguishable pattern while data analysis. 
We preprocess the data by creating non-overlapping time windows of length 192 time steps (4 days). We use the first 144 time steps (3 days) in the window as the predictor variables and the block maxima of the last 48 time steps (1 day) as the target variable.  

\paragraph{Weather:} We have used a weather dataset from the Kaggle competition~\cite{muthukumar.j_2017}. The data is based on hourly temperature data for a city over a ten-year period. We use the first 16 time steps (16 hours) in the window as the predictor variables and the block maxima of the last 8 time steps (8 hours) as the target variable.

\subsection{Experimental Setup}

For evaluation purposes, we split the data into separate training, validation, testing with a ratio of 7:2:1. The data is standardized to have zero mean and unit variance. We compare \texttt{DeepExtrema} against the following baseline methods: (1) Persistence, which uses the block maxima value from the previous time step as its predicted value, (2) fully-connected network (FCN), (3) LSTM, (4) Transformer, (5) DeepPIPE \cite{wang2020deeppipe}, and (6) EVL \cite{ding2019modeling}. We will use the following metrics to evaluate the performance of the methods: (1) Root mean squared error (RMSE) and correlation between the predicted and ground truth block maxima and (2) Negative log-likelihood (for synthetic data). %, and (3) prediction interval coverage probability (PICP) \cite{wang2020deeppipe}. PICP is used for uncertainty quantification, as it measures the effectiveness of the quantile predictions. PICP is given by the ratio of the number of captured data points (i.e., ground truth block maxima within the upper and lower quantile of the predictions) to the total data points under a certain confidence interval. %In our study, it denotes the number of block maxima (ground truth) values that are within the upper and lower quantile predictions. 
Finally, hyperparameter tuning is performed by assessing the model performance on the validation set. The hyperparameters of the baseline and the proposed methods are selected using Ray Tune, a tuning framework with ASHA (asynchronous successive halving algorithm) scheduler for early stopping. 

\subsection{Experimental Results}

\begin{table}[t!]
\begin{tabular}{|c|c|c|}
\hline
\multicolumn{3}{|c|}{Negative Log-likelihood}           \\ \hline
DeepExtrema & Ground Truth &  Global GEV Estimate \\ \hline
\textbf{4410}  & 4451    & 4745             \\ \hline
\end{tabular}
\caption{Negative log-likelihood of \texttt{DeepExtrema} with respect to ground truth and global parameter estimation using synthetic data}
\label{log-likelihood_table}
\end{table}

\subsubsection{Results on Synthetic Data} 

In this experiment, we have compared the performance of \texttt{DeepExtrema} against using a single (global) GEV parameter to fit the data. Based on the results shown in Table \ref{log-likelihood_table}, \texttt{DeepExtrema} achieves a significantly lower negative log-likelihood of 4410 compared to the negative log-likelihood for global GEV estimate, which is 4745. This result supports the assumption that each block maxima comes from different GEV distributions rather than a single (global) GEV distribution. The results also suggest that the negative log likelihood estimated by \texttt{DeepExtrema} %can estimate the GEV parameters well with a low RMSE. %while outperforming the ground truth log-likelihood. 
%According to Table \ref{log-likelihood_table}, \texttt{DeepExtrema} achieves a negative log-likelihood of 4410 where 
is lower than that for the ground truth. %negative log-likelihood is 4457. Also, this log-likelihood is notably higher than the standard (global) GEV parameters estimation. It supports the assumption that each block maxima comes from different GEV distributions rather than a single (global) GEV distribution. 
% \textbf{Plot?}

%\begin{table}[]
%\centering
%\begin{tabular}{|c|c|c|}
%\hline
%\multicolumn{3}{|c|}{RMSE of GEV Parameters} \\ \hline
%$\mu$  & $\sigma$  & $\xi$ \\ \hline
%0.046   & 0.029      & 0.034  \\ \hline
%\end{tabular}
%\caption{RMSE of GEV parameters using synthetic data}
%\label{rmse_xi}
%\end{table}

\begin{table}[h]
\resizebox{\columnwidth}{!}{%
\begin{tabular}{|c|cc|cc|cc|}
\hline
\multirow{2}{*}{Methods} & \multicolumn{2}{c|}{Hurricanes}                     & \multicolumn{2}{c|}{Ausgrid}                       & \multicolumn{2}{c|}{Weather}                       \\ \cline{2-7} 
                         & \multicolumn{1}{c|}{RMSE}           & Correlation   & \multicolumn{1}{c|}{RMSE}          & Correlation   & \multicolumn{1}{c|}{RMSE}          & Correlation   \\ \hline
Persistence              & \multicolumn{1}{c|}{28.6}           & 0.6           & \multicolumn{1}{c|}{0.84}          & 0.65          & \multicolumn{1}{c|}{4.16}          & 0.96          \\ \hline
FCN                      & \multicolumn{1}{c|}{14.14}          & 0.87          & \multicolumn{1}{c|}{0.69}          & 0.65          & \multicolumn{1}{c|}{2.5}           & 0.97          \\ \hline
LSTM                     & \multicolumn{1}{c|}{13.31}          & 0.88          & \multicolumn{1}{c|}{0.65}          & 0.64          & \multicolumn{1}{c|}{2.53}          & 0.97          \\ \hline
Transformer              & \multicolumn{1}{c|}{13.89}          & 0.88          & \multicolumn{1}{c|}{0.68}          & 0.62          & \multicolumn{1}{c|}{2.43}          & \textbf{0.98} \\ \hline
DeepPIPE                 & \multicolumn{1}{c|}{13.67}          & 0.87          & \multicolumn{1}{c|}{0.71}          & 0.59          & \multicolumn{1}{c|}{2.59}          & 0.94          \\ \hline
EVL         & \multicolumn{1}{c|}{15.72}          & 0.83          & \multicolumn{1}{c|}{0.75}          & 0.54          & \multicolumn{1}{c|}{2.71}          & 0.90          \\ \hline
DeepExtrema              & \multicolumn{1}{c|}{\textbf{12.81}} & \textbf{0.90} & \multicolumn{1}{c|}{\textbf{0.63}} & \textbf{0.67} & \multicolumn{1}{c|}{\textbf{2.27}} & 0.97          \\ \hline
\end{tabular}
}
\caption{Performance comparison on real world data.} %: NHC best-track with respect to other baselines}
\label{nhc_results}
\end{table}

\subsubsection{Results on Real-world Data}

Evaluation on real-world data shows that \texttt{DeepExtrema} outperforms other baseline methods used for comparison for all data sets (see Table \ref{nhc_results}). For RMSE, \texttt{DeepExtrema} generates lower RMSE compared to all the baselines on all 3 datasets, whereas for correlation, \texttt{DeepExtrema} outperforms the baselines on 2 of the 3 datasets. 

%Not only it does well on the RMSE of block maxima, but it also achieves a high correlation. %For Hurricanes, it exceeds the 90\% confidence interval with 92\% PICP: 92\% of the block maxima are inside the upper and lower quantile estimations of 90\% confidence interval. Similarly, it exceeds 90\% confidence interval with 95.7\% PICP and 94.3\% PICP for Ausgrid and Weather dataset respectively. Though DeepPIPE has higher PICP due to their architecture design, their RMSE of block maxima and correlation values are worse. 
%Moreover, the RMSE of the NHC official forecasts for Hurricanes is 11.35 while its correlation is 0.92. This suggests that our proposed model, which uses only historical data, can achieve comparable performance as the gold standard. This is not surprising since the official NHC forecasts use output from an ensemble of dynamical and statistical models along with expert's knowledge to generate its forecasts whereas \texttt{DeepExtrema} uses only historical information of hurricane intensities at previous time steps as its predictors. The performance of the official NHC forecasts would give an estimate of how far off the deep learning models perform compared to the gold standard despite using far less information.

\begin{figure}[t!]
\centering
\includegraphics[width=0.75\columnwidth]{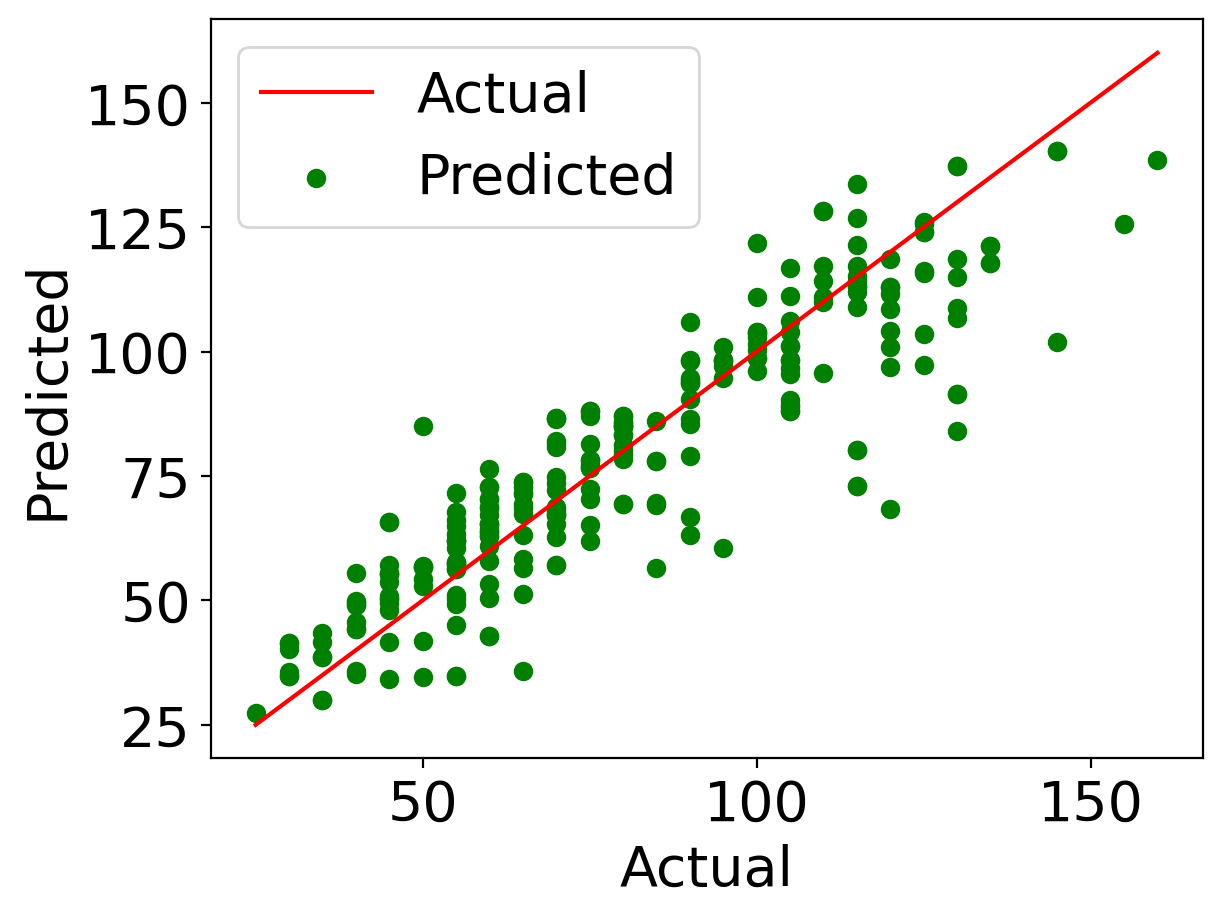} % Reduce the figure size so that it is slightly narrower than the column. Don't use precise values for figure width.This setup will avoid overfull boxes.
\caption{Comparison between actual and predicted block maxima of hurricane intensities for \texttt{DeepExtrema}.}
\label{nhc_line}
\end{figure}

\begin{figure}[t!]
\centering
\includegraphics[width=0.75\columnwidth]{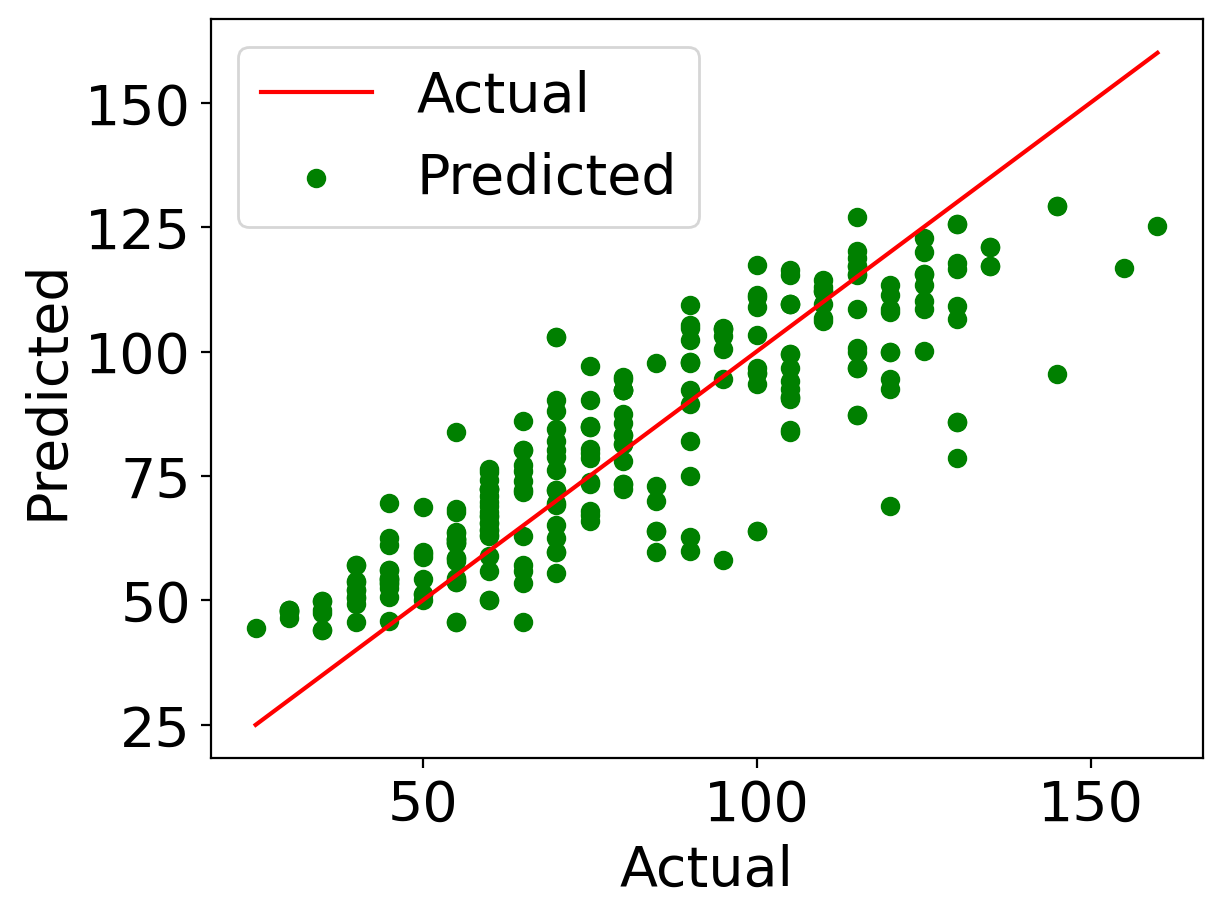} 
\caption{Comparison between actual and predicted block maxima of hurricane intensities for \texttt{EVL} \protect\cite{ding2019modeling}.}
\label{evl_result}
\end{figure}

\begin{figure}[t!]
\centering
\includegraphics[width=0.75\columnwidth]{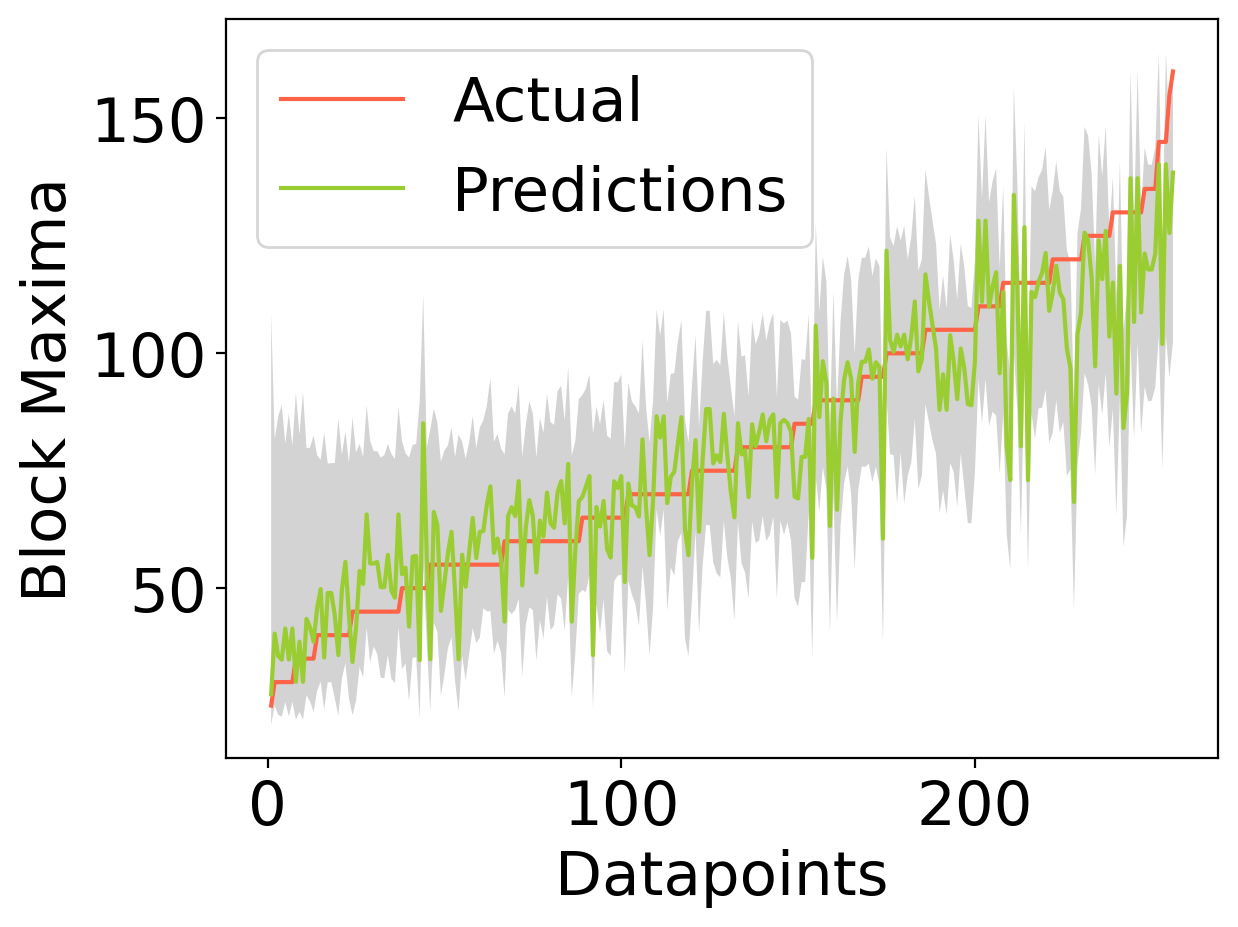} % Reduce the figure size so that it is slightly narrower than the column. Don't use precise values for figure width.This setup will avoid overfull boxes.
\caption{90\% confidence interval of the hurricane intensity predictions for \texttt{DeepExtrema}, sorted in increasing block maxima values. Ground truth values are shown in red.}
\label{nhc_quantile}
\end{figure}

To demonstrate how well the model predicts the extreme values, Figure \ref{nhc_line} shows a scatter plot of the actual versus predicted values generated by \texttt{DeepExtrema} on the test set of the hurricane intensity data. The results suggest that \texttt{DeepExtrema} can accurately predict the hurricane intensities for a wide range of values, especially those below 140 knots. \texttt{DeepExtrema} also does a better job at predicting the high intensity hurricanes compared to \texttt{EVL} \cite{ding2019modeling}, as illustrated in  Figure \ref{evl_result}.
Figure \ref{nhc_quantile} shows the 90\% confidence interval of the predictions.  
Apart from the point and quantile estimations, \texttt{DeepExtrema} can also estimate the GEV parameter values for each hurricane. %Validation of these estimations is not possible, but the following figures provide important insights into characteristics of the predicted distribution of the hurricane intensities. For other two data sets, we find comparable visualizations.

\begin{comment}
\textbf{\begin{figure}[h]
\centering
\includegraphics[width=0.9\columnwidth, height=5cm]{} % Reduce the figure size so that it is slightly narrower than the column. Don't use precise values for figure width.This setup will avoid overfull boxes.
\caption{Magnitude  of $\mu$ with respect increasing intensities of hurricane.}
\label{nhc_mu}
\end{figure}

\begin{figure}[h]
\centering
\includegraphics[width=0.9\columnwidth, height=5cm]{} % Reduce the figure size so that it is slightly narrower than the column. Don't use precise values for figure width.This setup will avoid overfull boxes.
\caption{Magnitude  of $\xi$ with respect increasing intensities of hurricane.}
\label{nhc_xi}
\end{figure}

\begin{figure}[h]
\centering
\includegraphics[width=0.9\columnwidth, height=5cm]{} % Reduce the figure size so that it is slightly narrower than the column. Don't use precise values for figure width.This setup will avoid overfull boxes.
\caption{Magnitude  of $\sigma$ with respect increasing intensities of hurricane.}
\label{nhc_sigma}
\end{figure}}

{\color{red} I have commented out those 3 figures of $\mu, \sigma, \xi$ analysis. Then the next paragraph should be removed too, as the corresponding figures are removed. Or, should I keep the next paragraph without using the figures?}

According to Figure \ref{nhc_xi}, the magnitude of $\xi$ decreases in the upper tail part of the distribution, which is consistent our expectation. Similarly,  Figures \ref{nhc_mu}  and \ref{nhc_sigma} show an increasing trend in $\mu$ and a decreasing trend in $\sigma$ as the block maxima value of the hurricane intensity increases. So, tail end extremes, i.e., upper extremes have much larger $\mu$, smaller $\sigma$, and lower $\xi$. The observed trend in GEV parameters enable us to better characterize properties of the distribution of hurricane intensities. 
\end{comment}

\subsubsection{Ablation Studies}

The hyperparameter $\lambda_1$ in the objective function of \texttt{DeepExtrema} denotes the trade-off between regularized GEV loss and RMSE loss. Experimental results show that the RMSE of block maxima prediction decreases when $\lambda_1$ increases (see Table \ref{ablation}). This validates the importance of incorporating GEV theory to improve the accuracy of block maxima estimation instead of using the mean squared loss alone.   

\begin{table}[H]
\begin{center}
\begin{tabular}{|c|ccc|}
\hline
\multirow{2}{*}{\begin{tabular}[c]{@{}c@{}}Hyperparameter\end{tabular}} & \multicolumn{3}{c|}{\begin{tabular}[c]{@{}c@{}}RMSE of Block maxima\end{tabular}}     \\ \cline{2-4} 
                                                                                & \multicolumn{1}{c|}{Hurricanes}     & \multicolumn{1}{c|}{Ausgrid}       & Weather       \\ \hline
$\lambda_1  = 0.0$                                                                             & \multicolumn{1}{c|}{13.28}          & \multicolumn{1}{c|}{0.68}          & 2.52          \\ \hline
$\lambda_1   = 0.5$                                                                            & \multicolumn{1}{c|}{13.03}          & \multicolumn{1}{c|}{\textbf{0.63}} & 2.41          \\ \hline
$\lambda_1   = 0.9$                                                                            & \multicolumn{1}{c|}{\textbf{12.81}} & \multicolumn{1}{c|}{0.64}          & \textbf{2.27} \\ \hline
\end{tabular}
\caption{Effect of $\lambda_1$ on RMSE of block maxima prediction.}
\label{ablation}
\end{center}
\end{table}

% \subsubsection{Sensitivity Analysis}

\section{Conclusion}

%In this study, we have identified the challenges and limitations of deep learning methods in predicting and characterizing extreme events. To overcome the challenges, we have proposed 
This paper presents a novel deep learning framework called \texttt{DeepExtrema} that combines extreme value theory with deep learning to address the challenges of predicting extremes in time series. We offer a reformulation and re-parameterization technique for satisfying constraints as well as a model bias offset technique for proper model initialization. We evaluated our framework on synthetic and real-world data and showed its effectiveness. For future work, we plan to extend the formulation to enable more complex deep learning architectures such as those using an attention mechanism. %incorporate the incorporation of quantile loss and ordinal loss will be carried out. Also, an attention mechanism will be added to better understand temporal representation. 
In addition, the framework will be extended to model extremes in spatio-temporal data.

\section{Acknowledgments}

This research is supported by the U.S. National Science Foundation under grant IIS-2006633. Any use of trade, firm, or product names is for descriptive purposes only and does not imply endorsement by the U.S. Government.

%% The file named.bst is a bibliography style file for BibTeX 0.99c
\bibliographystyle{named}
\bibliography{ijcai22}

\end{document}